\documentclass{article}

\usepackage[english]{babel} 
\usepackage{microtype}      
\usepackage{csquotes}       
\usepackage[normalem]{ulem} 
\usepackage{xspace}         
\usepackage{textcomp}       

\usepackage{type1cm}        
\usepackage{bold-extra}     
\usepackage{bm}             
\usepackage{bbm}            
\usepackage{bbold}          
\usepackage{siunitx}        
\usepackage{xfrac}          

\usepackage{amsmath,amsfonts,amssymb,amsthm,mathtools} 
\usepackage{physics}        
\usepackage{thmtools,thm-restate} 

\usepackage{booktabs}       
\usepackage{multirow}       
\usepackage{array}          

\usepackage{graphicx}       
\usepackage[style=base, tableposition=top]{caption} 
\usepackage[caption=false,font=normalsize,labelfont=sf,textfont=sf]{subfig} 
\usepackage{tikz}           
\usetikzlibrary{patterns, positioning, arrows, bayesnet, calc} 
\usepackage{tikz-cd}        
\usepackage{pgfplots}       
\usepgfplotslibrary{patchplots}
\pgfplotsset{compat=1.15}   
\usepackage{tcolorbox}      

\usepackage{algorithmic}

\usepackage[bookmarks,colorlinks=true,pagebackref=true,backref=page]{hyperref} 
\usepackage[capitalize]{cleveref} 
\usepackage{cite}

\usepackage{balance}        
\usepackage{hyphenat}       
\usepackage[show]{chato-notes} 
\usepackage{stfloats}       
\usepackage{verbatim}       

\newcommand{\spara}[1]{\smallskip\noindent\textbf{#1}}

\newcommand{\epara}[1]{\smallskip\noindent\emph{#1}}

{\par\egroup\vskip 0.25ex}

\renewcommand*\backref[1]{\ifx#1\relax \else (Cited on #1) \fi}

\usepackage[dvipsnames]{xcolor}
\hypersetup{
   colorlinks=true,
   linkcolor={red!50!black},
   filecolor={green!50!black},
   citecolor={green!50!black}, 
   urlcolor={blue!80!black},
}

\definecolor{mypurple}{RGB}{254, 68, 218}
\definecolor{myred}{HTML}{E13D66}
\definecolor{mycyan}{HTML}{70D7D0}
\definecolor{mylightblue}{HTML}{2274A5}
\definecolor{mydarkblue}{HTML}{0C0A3E}

\theoremstyle{plain}
\newtheorem{theorem}{Theorem}[section]

\newtheorem{lemma}[theorem]{Lemma}
\newtheorem{corollary}[theorem]{Corollary}

\theoremstyle{definition}
\newtheorem{definition}[theorem]{Definition}

\theoremstyle{remark}

\crefname{theorem}{Thm.}{Thms.}
\crefname{proposition}{Prop.}{Props.}
\crefname{lemma}{lem.}{lems.}
\crefname{corollary}{Cor.}{Cors.}
\crefname{definition}{Def.}{Defs.}
\crefname{section}{Sec.}{Secs.}
\crefname{figure}{Fig.}{Figs.}
\crefname{problem}{Prob.}{Probs.}
\crefname{appendix}{App.}{Apps.}
\crefname{equation}{Eq.}{Eqs.}

\DeclareMathOperator*{\argmin}{arg\,min}

\newcommand{\frob}[1]{\ensuremath{\norm{#1}_{\mathrm{F}}}\xspace}
\newcommand{\KL}[3]{\ensuremath{D^{\mathrm{KL}}_{#1}\left(#2 || #3\right)}\xspace}

\newcommand{\measure}{\ensuremath{\chi}\xspace}

\newcommand{\prox}{\ensuremath{\mathrm{prox}}\xspace}

\newcommand{\spectral}{\textsc{spectral}\xspace}
\newcommand{\ntrials}{\textsf{\small{ntrials}}\xspace}

\newcommand{\pd}{\ensuremath{\mathcal{S}_{++}}\xspace}

\newcommand{\reall}{\ensuremath{\mathbb{R}}\xspace}
\newcommand{\scmcollection}{\ensuremath{{\mathcal{K}}}\xspace}

\newcommand{\edgeset}{\ensuremath{\mathcal{E}}\xspace}

\newcommand{\stiefel}[2]{\ensuremath{\mathrm{St}({#1},{#2})}\xspace}

\newcommand{\zeros}{\ensuremath{\boldsymbol{0}}\xspace}

\newcommand{\A}{\ensuremath{\mathbf{A}}\xspace}
\newcommand{\B}{\ensuremath{\mathbf{B}}\xspace}
\newcommand{\C}{\ensuremath{\mathbf{C}}\xspace}
\newcommand{\D}{\ensuremath{\mathbf{D}}\xspace}

\newcommand{\identity}{\ensuremath{\mathbf{I}}\xspace}
\newcommand{\K}{\ensuremath{\mathbf{K}}\xspace}

\newcommand{\T}{\ensuremath{\mathbf{T}}\xspace}
\newcommand{\U}{\ensuremath{\mathbf{U}}\xspace}
\newcommand{\V}{\ensuremath{\mathbf{V}}\xspace}
\newcommand{\Vhat}{\ensuremath{\widehat{\mathbf{V}}}\xspace}

\newcommand{\Y}{\ensuremath{\mathbf{Y}}\xspace}

\newcommand{\BPsi}{\ensuremath{\boldsymbol{\Psi}}\xspace}
\newcommand{\BUpsilon}{\ensuremath{\boldsymbol{\Upsilon}}\xspace}

\newcommand{\myendogenous}{\ensuremath{\mathcal{X}}\xspace}

\newcommand{\scm}[1]{\ensuremath{\mathsf{M}^{#1}}\xspace}

\newcommand{\abst}{\ensuremath{\boldsymbol{\alpha}}\xspace}

\newcommand{\covhigh}{\ensuremath{\boldsymbol{\Sigma}^{h}}\xspace}
\newcommand{\covlow}{\ensuremath{\boldsymbol{\Sigma}^{\ell}}\xspace}

\newcommand{\lmatdim}{\ensuremath{\{0,1\}^{\ell \times h}}\xspace}
\newcommand{\measurehigh}{\ensuremath{\chi^{h}}\xspace}
\newcommand{\measurelow}{\ensuremath{\chi^{\ell}}\xspace}

\newcommand{\redmatdim}{\ensuremath{\reall^{r_\ell \times r_h}}\xspace}

\newcommand{\rmatdim}{\ensuremath{\reall^{\ell \times h}}\xspace}

\newcommand{\scmhigh}{\ensuremath{\mathsf{M}^h}\xspace}
\newcommand{\scmlow}{\ensuremath{\mathsf{M}^\ell}\xspace}

\newcommand{\CAN}{\ensuremath{\mathbb{G}}\xspace}
\usepackage{spconf,graphicx}

\title{Learning Consistent Causal Abstraction Networks}
\name{Gabriele D'Acunto$^{1,2}$, Paolo Di Lorenzo$^{1,2}$, Sergio Barbarossa$^{1}$}
\address{
$^1$Information Engineering, Electronics and Telecommunications Dept.,
Sapienza University,
Rome, Italy\\
$^2$National Inter-University Consortium for Telecommunications (CNIT), Parma, Italy\\
{E-mail: \{gabriele.dacunto, paolo.dilorenzo, sergio.barbarossa\}@uniroma1.it}
}

\begin{document}
\ninept
\maketitle

\begin{abstract}
    Causal artificial intelligence aims to enhance explainability, trustworthiness, and robustness in AI by leveraging structural causal models (SCMs).  
    In this pursuit, recent advances formalize network sheaves and cosheaves of causal knowledge.
    Pushing in the same direction, we tackle the learning of consistent \emph{causal abstraction network} (CAN), a sheaf-theoretic framework where
    \emph{(i)} SCMs are Gaussian, 
    \emph{(ii)} restriction maps are transposes of constructive linear causal abstractions (CAs) adhering to the semantic embedding principle, 
    and \emph{(iii)} edge stalks correspond--up to permutation--to the node stalks of more detailed SCMs.
    Our problem formulation separates into edge-specific local Riemannian problems and avoids nonconvex objectives.
    We propose an efficient search procedure, solving the local problems with \spectral, our iterative method with closed-form updates and suitable for positive definite and semidefinite covariance matrices.  
    Experiments on synthetic data show competitive performance in the CA learning task, and successful recovery of diverse CAN structures.
\end{abstract}
\begin{keywords}
    Causal abstraction, causal artificial intelligence, network sheaves, Stiefel manifold, structural causal models.
\end{keywords}

\section{Introduction}\label{sec:introduction}
\emph{Causal artificial intelligence} \cite{b2025causal} is an emerging paradigm in AI aiming at explainability, trustworthiness, and robustness to domain and distribution shifts.
At its core lies the \emph{structural causal model} (SCM, \cite{pearl2009causality}), in which causal variables are described in terms of \emph{causal mechanisms}, namely the incoming causal relations from other variables.
By enabling both \emph{interventions} and \emph{counterfactual reasoning}, the SCM supports the three layers of the \emph{Pearl Causal Hierarchy} \cite{pearl2018book}--\emph{seeing}, \emph{doing}, and \emph{imagining}--thus going beyond purely predictive and task-specific AI.
Accordingly, an SCM induces a set of \emph{observational}, \emph{interventional}, and \emph{counterfactual} probability measures on the causal variables, hereafter referred to as \emph{causal knowledge} (CK).

In this pursuit, and taking into account the \emph{collaborative} dimension central to Agentic AI \cite{10849561}, D'Acunto and Battiloro \cite{d2025relativity} postulated \emph{\enquote{SCMs as subjective and imperfect representations of the world, that cannot be severed from the network of relations they are immersed in}}.
This generalizes the canonical setting, where subjects--e.g., AI agents, individuals, satellites--are assumed to be in bijection with, and have direct access to, a ground-truth SCM.
Additionally, the SCM is inherently tied to a \emph{perspective}, determined by its position within the network of relations.
Subjective CK spreads across the network via \emph{interventionally consistent causal abstraction} (CA) \cite{beckers2019abstracting,rischel2020category} maps and their duals, ultimately leading to the definition of relative CK: 
in words, \emph{\enquote{the CK of subject A from the perspective of subject B}}.

The construction described above is formalized via \emph{functorial} constructions, called \emph{network sheaf and cosheaves of CK} \cite{d2025relativity}.
Here, CK--represented as convex spaces of probability measures \cite{fritz2009convex}--is attached to nodes and edges of the network, and affine measurable maps correspond to node-edge incidence relations.
The former are called node and edge stalks, the latter are restriction (for sheaves) and extension (for cosheaves) maps.
This novel functorial framework potentially opens to AI systems where \emph{global CK} emerges \emph{inductively} from the interaction of causal AI agents, while local CK at the agent level can, in turn, be \emph{deductively} shaped by the system.
Our work moves in this direction, focusing on learning a specific instance of this framework, the \emph{causal abstraction network} (CAN, \cite{d2025cantl}).

\spara{Related works.}
We work under the setting of \cite{d2025causal}, specifically without assuming: 
\emph{(i)} complete specification of the SCMs;
\emph{(ii)} knowledge of their causal graphs;
\emph{(iii)} their functional forms;
\emph{(iv)} availability of interventional data; or
\emph{(v)} jointly sampled observational data.
As in \cite{d2025relativity}, we consider CAs as mappings between SCMs and adopt the \abst-abstraction framework \cite{rischel2020category}.
In detail, we focus on \emph{constructive linear causal abstractions} (CLCAs, \cite{massidda2024learning,beckers2019abstracting}) adhering to the \emph{semantic embedding principle} (SEP, \cite{d2025causal}), informally stating that the high-level CK must be preserved when embedded in the low-level.
Our problem formulation builds upon \Cref{th:existenceCA} in \cite{d2025causal}, which focuses on CLCA learning outside the sheaf-theoretic framework.
Finally, \cite{d2025cantl} represents an extended version of this paper,  introducing the CAN framework and studying its theoretical properties.

\spara{Contributions.}
Given a collection of Gaussian measures from subjective SCMs, we formulate the problem of learning a consistent CAN.
Our formulation separates into edge-specific local problems corresponding to CLCA learning tasks.
For these, we propose an alternative to \cite{d2025causal}, which avoids nonconvex objectives and explicitly characterizes the set of optimal CLCAs. 
Our solution is an efficient search procedure based on \Cref{th:existenceCA} and the compositionality of CLCAs.
This procedure employs the Riemannian method \spectral to solve the local problems iteratively with updates in closed form.
Unlike the methods in \cite{d2025causal}, \spectral applies to both positive definite and positive semidefinite covariance matrices. 
The latter are especially relevant for global sections, consistent assignments of probability measures---either observational or interventional---to each node of the network that does not break local rules \cite{d2025relativity}. 
Experiments on synthetic data show that \spectral performs comparably to the methods in \cite{d2025causal} when learning the restriction maps from positive definite covariances.
Furthermore, our search procedure successfully recovers different CAN structures when global sections are observed, i.e., when positive semidefinite covariances are provided as inputs.

\smallskip
\emph{Our work is a step toward practical learning of causal abstraction networks, with potential for agentic and collaborative AI systems.}

\section{A Primer on Causal Abstraction Networks}\label{sec:background}

This section gives the notation and key concepts of consistent CANs.
For a comprehensive treatment please refer to \cite{d2025cantl}.

\spara{Notation.} The set of integers from $1$ to $n$ is $[n]$.
The vector of zeros of size $n$ is $\zeros_n$.
The identity matrix of size $n \times n$ is $\identity_n$. 
The set of positive definite matrices over $\reall^{n\times n}$ is $\pd^n$.
The Stiefel manifold is $\stiefel{\ell}{h} \!\coloneqq\! \{ \V \!\in \reall^{\ell \times h} \mid \V^\top\V \!= \identity_h \}\,.$
The Hadamard product is $\odot$.
$\underline{\A}$ is the column-wise vectorization of \A.

\spara{Consistent causal abstraction network.}
Let us consider a collection of $N$ subjective SCMs, denoted by $\scmcollection$, modeling the same phenomenon at different levels of granularity.
For instance, different teams of neuroscientists using different brain atlases to study effective connectivity \cite{friston2011functional}.
Each SCM $\scm{\ell}$ has a different number of $\ell$ causal variables $\myendogenous^\ell=\{X_1,\ldots, X_\ell\}$--the regions of interest (ROIs) individuated by the atlas--and we assume it induces an observational zero-mean Gaussian measure $\measure^\ell \sim N(\zeros_\ell, \covlow)$--the joint probability density of the signals from ROIs.
Thus, we say the SCM is Gaussian.

We assume the SCMs relate through CLCAs.
Specifically, consider two SCMs \scm{\ell} and \scm{h}, where the former is the low-level (fine-grained) and the latter is the high-level (coarse-grained abstraction).
A CLCA is a pair denoted by $\langle \B_{h\ell}, \V_{h\ell} \rangle$, where 
\emph{(i)} $\B_{h\ell} \in \{0,1\}^{h \times \ell}$ encodes a structural disjoint partitioning of the low-level variables $\myendogenous^\ell$ into the high-level $\myendogenous^h$,
and \emph{(ii)} $\V_{h\ell} \in \reall^{h \times \ell}$ is a linear mapping from the outcome of low-level variables to that of the high-level, whose nonzero entries are determined by ones in $\B_{h\ell}$.
Here we focus on particularly well-behaved CAs, i.e., those adhering to SEP.
The SEP implies that going from the high-level model \scmhigh to the low-level model \scmlow and then abstracting back to \scmhigh allows for perfect reconstruction of CK.
For CLCAs, SEP relates to the geometry of the Stiefel manifold, implying $\V_{\ell h} \in \stiefel{\ell}{h}$, where $\V_{\ell h}=\V_{h \ell}^\top$. 
Additionally, it induces a necessary condition for Gaussian measures, an application of the Ostrowski's theorem for rectangular matrices \cite{higham1998modifying}.
\begin{theorem}[From \cite{d2025causal}]\label{th:existenceCA}
    Let $\measurelow \sim N(\zeros_\ell, \covlow)$, $\measurehigh \sim N(\zeros_h, \covhigh)$, where $\covlow \in \pd^{\ell}$ and $\covhigh \in \pd^{h}$.
    Denote by $0<\lambda_1\leq \ldots \leq \lambda_{\ell}$ the eigenvalues of \covlow, and by $0<\kappa_1 \leq \ldots\leq \kappa_{h}$ those of \covhigh.
    If a CLCA \V complying with SEP from \measurelow to \measurehigh exists, then
    \begin{equation}\label{eq:spectralCA}
        \lambda_i \leq \kappa_i \leq \lambda_{i + \ell -h}, \quad \forall \,i \in [h]\,.
    \end{equation}
\end{theorem}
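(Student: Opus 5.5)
Under the SEP and the CLCA model, the abstraction acts on the Gaussians by pushforward. The high-level covariance is obtained from the low-level one as
\[
\covhigh = \V^\top \covlow \V,
\]
where $\V \in \stiefel{\ell}{h}$ is the transpose of the $h\times\ell$ linear map $\V_{h\ell}$. So the plan is first to make this identity explicit from the definitions above, and then to read off \eqref{eq:spectralCA} as the Cauchy interlacing (Poincaré separation) theorem for compressions of a symmetric matrix onto an $h$-dimensional subspace.

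\textbf{Step 1: the covariance identity.} The SEP says the high-level measure is recovered exactly after embedding and abstracting back, and $\V_{\ell h}=\V_{h\ell}^\top$ has orthonormal columns. If $X\sim\measurelow$, the abstracted variable $\V_{h\ell}X$ is zero-mean Gaussian with covariance $\V_{h\ell}\covlow\V_{h\ell}^\top = \V^\top\covlow\V$. Interventional consistency of the abstraction (pushforward of $\measurelow$ equals $\measurehigh$) then forces $\covhigh=\V^\top\covlow\V$ with $\V^\top\V=\identity_h$. I expect this step to be the main obstacle. It is not computational: the difficulty is pinning down precisely which condition the paper means by ``complying with SEP''. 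If SEP is instead phrased via the embedding $\measurehigh\mapsto \V\,\cdot$ followed by abstraction, the same identity should still arise, because $\V^\top\V=\identity_h$ makes the round trip the identity on $\measurehigh$.

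\textbf{Step 2: interlacing.} Complete $\V$ to an orthogonal matrix $\mathbf{Q}=[\V\ \ \mathbf{W}]\in\reall^{\ell\times\ell}$. Then $\mathbf{Q}^\top\covlow\mathbf{Q}$ has the same eigenvalues $\lambda_1\le\dots\le\lambda_\ell$ as $\covlow$, and its leading $h\times h$ principal submatrix is $\covhigh$. Cauchy's interlacing theorem, iterated over $\ell-h$ deletions of a row and column, gives $\lambda_i\le\kappa_i\le\lambda_{i+\ell-h}$ for every $i\in[h]$.

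Alternatively, the interlacing can be proved directly from Courant–Fischer. For the lower bound,
\[
\kappa_i=\min_{\dim S=i,\,S\subseteq\reall^h}\ \max_{x\in S,\ \|x\|=1} x^\top\V^\top\covlow\V x .
\]
Since $\V$ is an isometry, $\V S$ is an $i$-dimensional subspace of $\reall^\ell$ and $\|\V x\|=1$, so the right-hand side is at least $\lambda_i$. For the upper bound, use the max–min form of Courant–Fischer restricted to subspaces of $\operatorname{range}(\V)$. Equivalently, apply the lower bound to $-\covlow$, whose $(h-i+1)$-th smallest compressed eigenvalue is controlled by the $(\ell-i+1)$-th smallest eigenvalue of $-\covlow$; this yields $\kappa_i\le\lambda_{i+\ell-h}$.

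Positive definiteness of $\covlow$ and $\covhigh$ is not needed for the inequalities themselves; it only ensures the Gaussians are nondegenerate, as in the statement.
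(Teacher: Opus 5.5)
Your proof is correct and follows essentially the paper's route: the abstraction pushing $\measurelow$ forward to $\measurehigh$ gives $\covhigh=\V^\top\covlow\V$, and SEP gives $\V\in\stiefel{\ell}{h}$, after which the paper invokes Ostrowski's theorem for rectangular matrices, which for a matrix with orthonormal columns (all scaling factors equal to $1$) is exactly the Poincar\'e separation / Cauchy interlacing you derive by orthogonal completion. One small index slip in your optional Courant--Fischer remark: the eigenvalue of $-\covlow$ that bounds $-\kappa_i$ is its $(h-i+1)$-th smallest, i.e.\ $-\lambda_{i+\ell-h}$, not its $(\ell-i+1)$-th smallest (which is $-\lambda_i$), although your stated conclusion $\kappa_i\le\lambda_{i+\ell-h}$ is right.
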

From \scmcollection, we build a poset defined by $\ell \leqslant h$ iff \scm{h} is a CLCA of \scm{\ell} adhering to SEP.
In light of CA compositionality \cite{d2025cantl}, for parsimony, we consider the \emph{transitive reduction} of the poset, keeping only \emph{irreducible} relations, i.e., $\ell \leqslant h$ such that no $m$ exists with $\ell \leqslant m \leqslant h$.
The dynamics of CK is modelled through a consistent CAN, capturing both abstraction and embedding.
\begin{definition}[Consistent Causal Abstraction Network \cite{d2025cantl}]\label{def:CAN}
    A consistent CAN \CAN is an oriented undirected graph with:
    \emph{(i)} nodes corresponding to convex spaces of probability measures induced by Gaussian SCMs $\scm{\ell} \in \scmcollection$; 
    and \emph{(ii)} for each edge $e_{\ell h}$ determined by $\ell \leqslant h$, a linear embedding map $\V_{\ell h} \in \stiefel{\ell}{h}$ when moving from \scm{h} to \scm{\ell}, and a CLCA $\V_{h \ell}=\V_{\ell h}^\top$ when abstracting from \scm{\ell} to \scm{h}.
    The orientation of \CAN follows the embedding direction.
\end{definition}
The CAN is said consistent since $\V_{\ell h} \in \stiefel{\ell}{h}$ implies commutation of the composition of the restriction maps along every oriented loop $\scm{h}\stackrel{\text{embedding}}{\rightarrow}\scm{\ell}\stackrel{\text{abstraction}}{\rightarrow}\scm{h}$ \cite{d2025cantl}.
Together with a non-empty kernel of the associated combinatorial Laplacian, consistency is a necessary condition for the existence of global sections \cite{d2025cantl}.
This result has immediate implications for the statistical properties of global sections, which are consistent assignments of probability measures $\chi\coloneqq\{\measure_1, \ldots, \measure_N\}$ to each node of the network that does not break local rules \cite{d2025relativity,d2025cantl}.  
\begin{corollary}\label{cor:same_spectrum}
    Let $\chi$ be a global section.
    Then, the covariance matrices of its elements $\chi_i\sim N(\zeros_{d_i}, \boldsymbol{\Sigma}^{d_i})$, $i \in [N]$, have the same $k\leq \min_i d_i=h$ nonzero eigenvalues.  
\end{corollary}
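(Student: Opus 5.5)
We need to understand what a global section means here. It is an assignment of Gaussian measures $\chi_i$ to the nodes such that, for every edge $e_{\ell h}$, the local rules are respected. Since the restriction maps are the transposes of the CLCAs, that is the embeddings $\V_{\ell h}$, the natural reading of "not breaking local rules" is as follows: the measure at the low-level node is the pushforward of the measure at the high-level node through the embedding, and equivalently the high-level measure is recovered by abstracting back. In covariance terms this reads
\[
\boldsymbol{\Sigma}^{\ell} = \V_{\ell h}\boldsymbol{\Sigma}^{h}\V_{\ell h}^\top, \qquad \boldsymbol{\Sigma}^{h} = \V_{\ell h}^\top\boldsymbol{\Sigma}^{\ell}\V_{\ell h}.
\]
By consistency, $\V_{\ell h}^\top\V_{\ell h}=\identity_h$, so the second identity follows from the first. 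I would take this edge-wise agreement, stated in \cite{d2025cantl}, as the definition I work from.

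\textbf{Key step: spectra agree along an edge.} Fix an edge with $\V\coloneqq\V_{\ell h}\in\stiefel{\ell}{h}$. Take an eigendecomposition $\boldsymbol{\Sigma}^{h}=\U\boldsymbol{\Lambda}\U^\top$. Then
\[
\boldsymbol{\Sigma}^{\ell}=(\V\U)\boldsymbol{\Lambda}(\V\U)^\top,
\]
and $\V\U$ has orthonormal columns. Completing $\V\U$ to an orthogonal $\ell\times\ell$ matrix shows that $\boldsymbol{\Sigma}^{\ell}$ has eigenvalues equal to those of $\boldsymbol{\Sigma}^{h}$ together with $\ell-h$ additional zeros. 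Hence both matrices have the same nonzero eigenvalues, with multiplicities, and the same rank.

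\textbf{Propagation to all nodes.} Along an edge the nonzero spectrum is invariant, so it is constant on each connected component of $\CAN$. I would assume, as the statement implicitly does, that $\CAN$ is connected, since it is built from SCMs of the same phenomenon. This gives a common multiset of $k$ nonzero eigenvalues for every node.

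\textbf{Bound on $k$.} Each $\boldsymbol{\Sigma}^{d_i}$ is $d_i\times d_i$, so $k\le d_i$ for every $i$, hence $k\le\min_i d_i$. The node attaining the minimum, of dimension $h$, is the most abstract SCM and gives the bound $k\le h$.

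\textbf{Main obstacle.} The delicate point is pinning down exactly what "global section" imposes. If only abstraction-consistency $\boldsymbol{\Sigma}^{h}=\V^\top\boldsymbol{\Sigma}^{\ell}\V$ were required, the argument above would fail: \Cref{th:existenceCA} gives only interlacing, not equality of spectra. The argument therefore relies on the sheaf condition through the embedding restriction maps, i.e.\ the low-level measure being the pushforward of the high-level one. Under that condition the low-level covariances are rank-deficient, which is precisely why positive semidefinite covariances arise for global sections. I would verify this against the definition in \cite{d2025cantl,d2025relativity} before writing the proof.
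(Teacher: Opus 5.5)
Your proof is correct and follows the route the paper intends. The paper omits the proof and defers it to \cite{d2025cantl}, presenting the corollary as an immediate consequence of consistency: on each edge the section condition $\covlow=\V_{\ell h}\covhigh\V_{\ell h}^\top$ together with $\V_{\ell h}^\top\V_{\ell h}=\identity_h$ preserves the nonzero spectrum with multiplicities, and connectedness of \CAN propagates it to every node. The permutation identifying the edge stalk with the finer node stalk is orthogonal and changes nothing. You are also right that connectedness is an implicit hypothesis, and that abstraction-consistency alone would give only the interlacing of \Cref{th:existenceCA}.
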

The proof is omitted for lack of space, but is available in \cite{d2025cantl}.
Similarly to graph signal processing \cite{ortega2018graph}, \emph{smoothness} w.r.t. a consistent \CAN is defined.
Intuitively, a collection of SCMs is smooth w.r.t. \CAN if, for every edge $e_{\ell h}: \ell \sim h$, the coarser \scm{h} is---approximately---a CLCA of the finer \scm{\ell}.
Following \cite{d2025causal}, smoothness can be measured in terms of how well the abstraction of the low-level probability measure \measurelow approximates the high-level \measurehigh.
Specifically, denoting by $\KL{\alpha}{\mu}{\nu}$ the Kullback--Leibler (KL) divergence between $\mu$ and the pushforward measure induced by the map $\alpha$ applied to $\nu$, we measure smoothnes as the sum over $e_{\ell h} \in \edgeset$ of $\KL{\V_{h \ell}}{\measurehigh}{\measurelow}$.
\section{Learning causal abstraction networks}\label{sec:learning_CAN}

Our input is a collection $\chi=\{\chi_1, \ldots, \chi_N\}$ of Gaussian probability measures--sorted in descending order of dimensionality--induced by $N$ SCMs, assumed to be smooth w.r.t. \CAN.  
From \Cref{def:CAN}, learning a consistent CAN amounts to learning \emph{(i)} the CA relations determining its edge set \edgeset, 
and \emph{(ii)} the CLCAs $\{\V_{\ell h}^\top\}$ for each $e_{\ell h} \in \edgeset$, where $\V_{\ell h} \in \stiefel{\ell}{h}$.
Let $\mathcal{F}$ be the set of edges satisfying the CLCA necessary condition in \cref{th:existenceCA}.
Then, our learning problem is
\begin{equation}\label{eq:globalProblem}
    \begin{aligned}
        \min_{\{\V_{\ell h} \in \stiefel{\ell}{h}\}} \; \sum_{e_{\ell h} \in \mathcal{F}} \KL{\V_{\ell h}^\top}{\measurehigh}{\measurelow}\,. 
    \end{aligned}
    \tag{P1}
\end{equation}

Notably, the set of CLCAs $\V_{\ell h}^\top$ for which the local KL vanishes identifies the learned edge set of \CAN.
By construction $\V_{\ell h}$ only makes sense in relation to a CA, thus the solution to \eqref{eq:globalProblem} is the latter set of CLCAs.
Problem \eqref{eq:globalProblem} is separable along the edges, and each local problem can be solved with the algorithms proposed by \cite{d2025causal} when both \covlow and \covhigh are positive definite.
However, as global sections with positive semidefinite covariances (cf. \Cref{cor:same_spectrum}) must also be accounted for, existing methods must be generalized.
Moreover, even in the positive definite case \eqref{eq:globalProblem} remains challenging for several reasons.
First, the search over $\mathcal{F}$ requires solving several CA learning problems, at most $N(N-1)/2$. 
This becomes computationally expensive as $N$ increases, also because each local problem is nonconvex and must be solved multiple times with different random initializations to obtain reliable results \cite{d2025causal}.
Second, considering zero-mean Gaussians and denoting by $r_h$ the rank of \covhigh, the KL divergence
\begin{equation}\label{eq:KL}
    \begin{aligned}
        &\KL{\V_{\ell h}^\top}{\measurehigh}{\measurelow}=\Tr{\left( \V_{\ell h}^\top \covlow \V_{\ell h}\right)^\dag \covhigh} +\\
        &+ \log \mathrm{gdet} \left\lbrace\V_{\ell h}^\top \covlow \V_{\ell h} \right\rbrace - \log \mathrm{gdet} \left\lbrace \covhigh \right\rbrace - r_h\,,
    \end{aligned}
\end{equation}
is nonconvex in $\V_{\ell h}$ and costly to optimize \cite{d2025causal}.
Third, the Stiefel manifold imposes nonconvex orthogonality constraints on $\V_{\ell h}$.

To address these challenges, we propose an efficient search strategy that leverages the compositionality of CAs \cite{d2025cantl}.
In addition, building on \Cref{th:existenceCA}, we introduce a problem formulation alternative to \cite{d2025causal} for local problems avoiding nonconvex, costly objectives, and the related \spectral algorithm that applies to both positive definite and semidefinite cases.
Throughout, we assume full prior knowledge of the CA structure $\B_{h\ell}$, which is still a challenging setting \cite{d2025causal}.

\spara{Search procedure.}
Consider $\chi$ as above.
For each $\boldsymbol{\Sigma}^{d_i}$, $i \in [N]$, compute the eigenvalues.  
We encode possible CA relations according to \Cref{th:existenceCA} into a strictly lower triangular binary matrix $\mathbf{P} \in \{0,1\}^{N \times N}$.
Rather then applying \Cref{th:existenceCA} $N(N-1)/2$ times we fill $\mathbf{P}$ by leveraging the compositionality of CAs.
Specifically, we first apply \Cref{th:existenceCA} to the $N-1$ SCM pairs on the first subdiagonal, i.e., to consecutive neighbors in the chain $\chi_1-\chi_2-\ldots-\chi_N$.
We then set to $1$ the entries of $\mathbf{P}$ corresponding to pairs satisfying \Cref{th:existenceCA}, and leaving the others null.
Subsequently, we set $\mathbf{P}$ equal to its transitive closure.
Next, we move to the second subdiagonal.
At this stage, it is necessary to test at most $N-2$ SCM pairs, since some entries may already have been filled through the transitive closure at the previous step.
After the update, we apply again the transitive closure.
We repeat this procedure for all remaining subdiagonals.

Once $\mathbf{P}$ is constructed, we solve the local problems corresponding to its non-null entries.
Here too, the number of subproblems to be solved can be reduced by exploiting transitive closure.
Consider an initial empty $N \times N$ binary matrix $\mathbf{A}$.
We first solve the local problems for the nonnull entries of the first subdiagonal of $\mathbf{P}$, and set to $1$ the entries of $\mathbf{A}$ whenever the CLCA holds, i.e., when the stopping criteria for the local problem are matched.
We then compute the transitive closure of $\mathbf{A}$ and remove from $\mathbf{P}$ those entries already implied by this closure.
The same procedure is applied iteratively to the subsequent subdiagonals.
By construction, the learned $\mathbf{A}$ is the transitive reduction of all possible CA relations implied by the compositionality property.

\spara{Local problem.} 
We now turn to the alternative formulation of the CA learning local problem for a single edge $e_{\ell h}$.
Since $e_{\ell h} \in \mathcal{F}$, thus \Cref{th:existenceCA} holds and a CLCA between \scm{\ell} and \scm{h} might exist.
Let $\V_{\ell h}=\B \odot \V$, where $\B = \B_{h\ell}^\top$.
Since we consider zero-mean Gaussians, the inputs are $\covlow$ and $\covhigh$, generally positive semidefinite with ranks $r_\ell$ and $r_h$, respectively.
If the CLCA exists, then any $\V \in \stiefel{\ell}{h}$ such that 
\begin{equation}\label{eq:optabst}
    (\B \odot \V)^\top\covlow (\B \odot \V)=\covhigh\,,
\end{equation}
is optimal for the edge $e_{\ell h}$, i.e., it makes the local KL in \Cref{eq:KL} null.
Consider the eigendecompositions
\begin{equation}\label{eq:eigd}
    \covlow=\U_{\ell,+} \Lambda_{\ell,+} \U_{\ell,+}^\top\, , \text{ and } \covhigh=\U_{h,+} \Lambda_{h,+} \U_{h,+}^\top\,.
\end{equation}

Further, define the matrices $\A = \U_{\ell,+}  \Lambda_{\ell,+}^{\frac{1}{2}} \in \reall^{\ell \times r_\ell}$ and $\C=\U_{h,+}  \Lambda_{h,+}^{\frac{1}{2}} \in \reall^{h \times r_h}$.
Substituting \Cref{eq:eigd} into \Cref{eq:optabst}, and multiplying both sides on the left by $\C^\top$ and on the right by \C yields the condition
\begin{equation}\label{eq:spectral_cond}
    \T^\top \T = \identity_{r_h}\,, \text{ with } \T=\A^\top (\B \odot \V) \C\,.
\end{equation}

Thus, any $\V \in \stiefel{\ell}{h}$ consistent with the structure $\B$ and satisfying \Cref{eq:spectral_cond} yields an optimal CLCA.
Additionally, from the expression of \T, it is immediate to see that \Cref{eq:spectral_cond} holds whenever the principal axes of \covlow, projected via the CLCA $(\B \odot \V)^\top$, align with those of \covhigh.
Hence, we tackle the following feasibility problem:

\begin{equation}\label{eq:learning_problem}
    \begin{aligned}
        \text{find} \quad &\V, \T \in \stiefel{\ell}{h} \times \stiefel{r_\ell}{r_h} \\
        \text{such that} \quad & \T - \A^\top (\B \odot \V) \C=\zeros_{r_\ell \times r_h}\,.
    \end{aligned}
    \tag{SP1}
\end{equation}

\spara{The \spectral method.}
Handling the equality constraint in \eqref{eq:learning_problem} is challenging because $\V$ and $\T$ lie on Stiefel manifolds of different dimensionality, which are nonconvex.  
To address this, we exploit the rationale underlying the \emph{splitting of orthogonality constraints} (SOC, \cite{lai2014splitting}). 
Specifically, we relax $\V$ to the Euclidean space $\rmatdim$ and enforce its orthogonality through a splitting variable $\Y \in \stiefel{\ell}{h}$, subject to 
\begin{equation}\label{eq:constraintV}
    \Y - \B \odot \V = \zeros_{\ell \times h}\,. 
\end{equation}
Combining \Cref{eq:spectral_cond,eq:constraintV} and introducing scaled dual variables $\BPsi \in \rmatdim$ and $\BUpsilon \in \redmatdim$, we obtain the augmented Lagrangian of \eqref{eq:learning_problem},
\begin{equation}\label{eq:aL}
    \begin{aligned}
        \mathcal{L}(\V, \Y, \T, \BUpsilon, \BPsi) &= \frac{1}{2}\frob{\B \odot \V - \Y + \BPsi}^2 +\\
        &+ \frac{1}{2}\frob{\T - \A^\top (\B \odot \V) \C + \BUpsilon}^2\,.
    \end{aligned}
\end{equation} 
We then minimize \Cref{eq:aL} iteratively with respect to the primal variables and maximize it with respect to the dual variables through ADMM \cite{boyd2011distributed}.
Notably, the primal variables has updates in closed form.
The duals are the canonical running sum of primal residuals.

\epara{Update for \V.}
Consider the iteration $k$.
The subproblem to solve is
\begin{equation}
    \begin{aligned}
        \V^{k+1} &= \argmin_{\V} \frac{1}{2}\frob{\B \odot \V - \Y^k + \BPsi^k}^2 + \\
        &+ \frac{1}{2}\frob{\T^k - \A^\top (\B \odot \V) \C + \BUpsilon^k}^2\,.
    \end{aligned}
\end{equation}
The solution is given in closed, vectorized form.

\begin{lemma}\label{lem:updateV}
    Let $\mathcal{B}\coloneq\{i \in [\ell h] \mid b_i=1, b_i \in \underline{B}\}$, $\K = \C \otimes \A$, and $\mathbf{b}= \underline{\Y}^k - \underline{\BPsi}^k +\underline{\A \T^k \C^\top} -\underline{\A \BUpsilon^k \C^\top}$.
    Then,
    \begin{equation}\label{eq:updateV}
        \underline{\V}_{\mathcal{B}} = \left(\identity_{|\mathcal{B}|}+\K_\mathcal{B} \K_\mathcal{B}^\top\right)^{-1} \mathbf{b}_{\mathcal{B}}\,.
    \end{equation}
\end{lemma}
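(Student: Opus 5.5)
The plan is to vectorize the subproblem, reduce it to an unconstrained least-squares problem in the free entries of $\V$ (those indexed by $\mathcal{B}$), and read off the normal equations.

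\emph{Vectorization.} By the identity $\underline{\A^\top \mathbf{X} \C} = (\C^\top \otimes \A^\top)\underline{\mathbf{X}} = \K^\top \underline{\mathbf{X}}$, with $\K = \C\otimes\A$, we can rewrite both Frobenius terms as Euclidean norms. Since $\underline{\B\odot\V} = \underline{\B}\odot\underline{\V}$, only the coordinates in $\mathcal{B}$ enter the objective. Entries of $\V$ outside $\mathcal{B}$ do not affect the objective, so they can be set arbitrarily, e.g.\ to zero; this is consistent with the statement specifying only $\underline{\V}_{\mathcal{B}}$. Writing $\mathbf{v}=\underline{\V}_{\mathcal{B}}$, we have $\underline{\B\odot\V} = \mathbf{E}\mathbf{v}$, where $\mathbf{E}\in\{0,1\}^{\ell h\times|\mathcal{B}|}$ selects the coordinates in $\mathcal{B}$ and $\mathbf{E}^\top\mathbf{E}=\identity_{|\mathcal{B}|}$. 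Also $\K^\top\mathbf{E}=(\K_{\mathcal{B}})^\top$, where $\K_{\mathcal{B}}$ denotes the rows of $\K$ indexed by $\mathcal{B}$. The objective then becomes
\[
f(\mathbf{v})=\tfrac12\|\mathbf{E}\mathbf{v}-\underline{\Y}^k+\underline{\BPsi}^k\|^2+\tfrac12\|\K_{\mathcal{B}}^\top\mathbf{v}-\underline{\T}^k-\underline{\BUpsilon}^k\|^2 .
\]

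\emph{Normal equations.} The function $f$ is a strictly convex quadratic, since its Hessian $\identity_{|\mathcal{B}|}+\K_{\mathcal{B}}\K_{\mathcal{B}}^\top$ is positive definite. Setting $\nabla f=0$ gives
\[
(\identity+\K_{\mathcal{B}}\K_{\mathcal{B}}^\top)\mathbf{v} = \mathbf{E}^\top(\underline{\Y}^k-\underline{\BPsi}^k) + \K_{\mathcal{B}}(\underline{\T}^k+\underline{\BUpsilon}^k).
\]
Positive definiteness guarantees both invertibility and uniqueness of the minimizer on $\mathcal{B}$.

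\emph{Matching the right-hand side with $\mathbf{b}_{\mathcal{B}}$.} The first term is $(\underline{\Y}^k-\underline{\BPsi}^k)_{\mathcal{B}}$. For the second, $\K(\underline{\T}+\underline{\BUpsilon}) = (\C\otimes\A)\underline{(\T+\BUpsilon)} = \underline{\A(\T+\BUpsilon)\C^\top}$, and selecting the rows in $\mathcal{B}$ gives the $\mathcal{B}$-block of that vector.

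This is the main obstacle: the sign of the $\BUpsilon$ term. My derivation gives $+\A\BUpsilon^k\C^\top$, while the stated $\mathbf{b}$ has $-\underline{\A\BUpsilon^k\C^\top}$. I would recheck this carefully against the augmented Lagrangian \eqref{eq:aL}, expanding $\|\T-\K_{\mathcal{B}}^\top\mathbf{v}+\BUpsilon\|^2$ explicitly. If the sign persists, I would conclude that it reflects the dual-variable convention: the dual update for $\BUpsilon$ is a running sum of residuals of the form $\A^\top(\B\odot\V)\C-\T$, i.e.\ with the opposite sign. Under that convention the statement matches up to the definition of $\BUpsilon$. The dimension checks ($\K\in\reall^{\ell h\times r_\ell r_h}$) are routine.
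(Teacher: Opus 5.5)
Your argument is correct and is essentially the paper's proof: differentiate the quadratic subproblem, vectorize with $\K=\C\otimes\A$, and restrict to the active coordinates in $\mathcal{B}$. The paper masks with $\mathrm{diag}(\underline{\B})$ where you use a selection matrix, and your strict-convexity remark supplies the invertibility and uniqueness that the paper leaves implicit.

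Your sign check is also right. With \eqref{eq:aL} as displayed, the $\BUpsilon$ term enters $\mathbf{b}$ as $+\underline{\A\BUpsilon^k\C^\top}$. The stated minus sign corresponds to the convention $\tfrac12\|\A^\top(\B\odot\V)\C-\T+\BUpsilon\|_{\mathrm{F}}^2$, and so does the $+\BUpsilon^k$ inside the prox of the paper's $\T$-update. So this is a sign slip in the paper, not a gap in your proof.
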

\begin{proof}
    Differentiating \Cref{eq:aL} with respect to $\V$ and vectorizing via the Kronecker product yields the stationarity condition
    \begin{equation}\label{eq:stationarityV}
        \zeros_{\ell h} = \left(\D + \D \K (\D \K)^\top\right) \underline{\V} - \D \mathbf{b}\,,
    \end{equation}
    where $\D = \mathrm{diag}(\underline{\B})$.
    Only entries with $\D_{ii}=1$ are active in $\underline{\V}$; the others are zero. 
    Restricting to indices in $\mathcal{B}$ gives \Cref{eq:updateV}.
\end{proof}

\epara{Update for \Y.}
The subproblem to solve is
\begin{equation}\label{eq:updateY}
    \begin{aligned}
        \Y^{k+1} &= \argmin_{\Y} \frac{1}{2}\frob{\B \odot \V^{k+1} - \Y + \BPsi^k}^2 =\\
        &= \prox_{\stiefel{\ell}{h}}(\B \odot \V^{k+1} + \BPsi^k)\,;
    \end{aligned}
\end{equation}
where $\prox_{\stiefel{\ell}{h}}(\mathbf{S})$ is given by the $\U$ factor of the polar decomposition of $\mathbf{S}$ \cite{d2025causal}.

\epara{Update for \T.}
Analogously to \Y, we obtain
\begin{equation}\label{eq:updateT}
    \begin{aligned}
        \T^{k+1} &= \argmin_{\T} \frac{1}{2}\frob{\T - \A^\top (\B \odot \V^{k+1}) \C + \BUpsilon^k}^2 =\\
        &= \prox_{\stiefel{r_\ell}{r_h}}(\A^\top (\B \odot \V^{k+1}) \C + \BUpsilon^k)\,.
    \end{aligned}
\end{equation}

Following \cite{boyd2011distributed}, empirical convergence is assessed via primal and dual residuals, which must vanish as $k \to \infty$.
See \cite{d2025cantl} for details.

\spara{Computational cost comparison.}
The key advantage of \spectral lies in providing closed-form updates. 
Asymptotically, the most expensive step per iteration is the update of \V in \Cref{eq:updateV}, which costs $\mathcal{O}(\ell^3)$ flops. 
In contrast, the methods in \cite{d2025causal} incur a cost of $\mathcal{O}(K \ell^3)$, where $K$ is the number of their corresponding inner iterations–-each costing $\mathcal{O}(\ell^3)$ asymptotically–-required to solve the \V-subproblem. 
\section{Empirical assessment}\label{sec:empirical_assessment}

To empirically evaluate the performance of the proposed learning method, we first test the local solver \spectral, and then address the learning of different CAN structures.

\spara{Performance on the local problem.}
Given as input the low- and high-level models, viz. \scm{\ell} and \scm{h}, the goal is to recover the CLCA matrix $\V^\top$ under full prior knowledge of \B.  
We compare \spectral with the CLinSEPAL, LinSEPAL-ADMM, and LinSEPAL-PG methods recently proposed in \cite{d2025causal}.  
These baselines minimize the KL divergence in \Cref{eq:KL} for the case of positive definite matrices, relying on different optimization frameworks (cf. \cite{d2025causal}).  

We tested the methods on the same synthetic dataset used in the full prior setting of \cite{d2025causal}.  
The dataset includes three configurations $(\ell, h) \in \{(12,2), (12,4), (12,6)\}$, corresponding to \emph{high}, \emph{medium-high}, and \emph{medium} abstraction levels.  
Please refer to \cite{d2025causal} for details on the data generating process.
To evaluate performance, we monitor 
\emph{(i)} constructiveness, i.e., the fraction of learned CAs that are constructive, 
\emph{(ii)} $\KL{\Vhat^\top}{\measurehigh}{\measurelow}$ for quantifying the alignment, 
\emph{(iii)} normalized absolute Frobenius distance between the estimate \Vhat and the ground truth $\V^\star$, 
and \emph{(iv)} F1 score for structural interventional consistency.

\Cref{fig:full_synth_data} shows the performance--we omit constructiveness and F1 score for space reasons--of the tested methods.
Vertical bars are interquartile ranges across $30$ simulations. 
All methods consistently yield CLCAs for all simulations and show F1 score equal to 1, thus recovering the ground truth structure encoded by $\B$.
The Frobenius distance shows that, under high coarse-graining, identifying $\V^\star$--up to sign--is more challenging, as the set of $\V$ satisfying \eqref{eq:learning_problem} enlarges.  
Finally, the methods also achieve good alignment.

To sum up, these results show that \spectral achieves comparable performance to the methods in \cite{d2025causal} in learning CLCAs with positive definite covariances as inputs, while avoiding nonconvex, costly objectives and providing updates in closed form.  

\begin{figure}[t]
    \centering
    \includegraphics[width=\columnwidth]{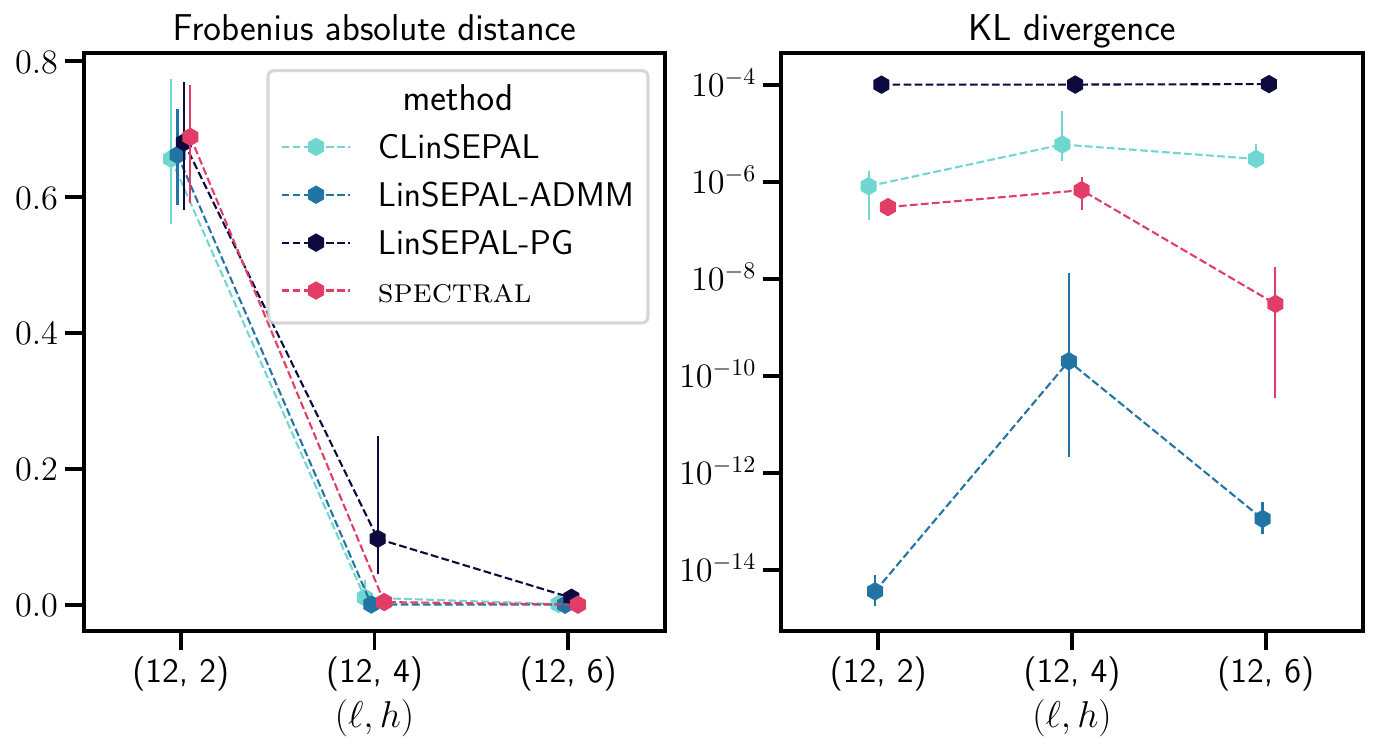}
    \caption{Synthetic results for the solution of the local problem across all settings $(\ell, h)$ from \cite{d2025causal}. 
    }
    \label{fig:full_synth_data}
\end{figure}

\spara{Performance on CAN learning.}
We then turn to the learning of CANs.
We set $N=10$ and assign random dimensions $\ell \in [2, 2N]$ to each \scm{\ell}.
We then attach the SCMs to the nodes of three graphical structures:
\emph{(i)} the \emph{chain} whose full CAN (transitive closure) is the fully-connected graph;
\emph{(ii)} the \emph{star}, which coincides with the full CAN;
\emph{(iii)} the \emph{tree} whose full CAN results in a denser tree.
For each CLCA $\ell \leqslant h$ in the transitive closure, we randomly generate $\B_{\ell h} \in \lmatdim$ and sample $\V_{\ell h} \in \stiefel{\ell}{h}$ accordingly. 

All considered structures admit global sections, which can be generated iteratively starting from the coarsest \measurehigh (cf. \cite{d2025cantl}).  
For each structure, we generate $S=30$ global sections and run the search procedure described in \Cref{sec:learning_CAN}, using \spectral to solve each local problem.  
Because of nonconvexity, each local problem is solved by running \spectral up to $\ntrials \in \{10,100\}$ times, with different initializations.  
If residual convergence is established within $\ntrials$ runs, we stop and set the corresponding entry of \A to $1$.  
In the experiments, we set residuals tolerances to $10^{-3}$ and cap the maximum number of iterations at $K=1000$.  
The performance is evaluated in terms of \emph{false positive rate} (FPR) and \emph{true positive rate} (TPR) w.r.t. the full CAN.  
By \Cref{cor:same_spectrum}, the covariances of nodes $\measure_1, \ldots, \measure_9$ are positive semidefinite and share the same nonzero eigenvalues, corresponding to the spectrum of $\measure_{10}$. 
Hence, only \spectral can solve the local CA learning problems.
This is an extreme scenario as well, since the necessary condition of \Cref{th:existenceCA} holds true for all $N(N-1)/2$ SCM pairs.  
Consequently, the matrix $\mathbf{P}$ has all entries equal to $1$ in its strictly lower triangular part, and thus provides no information to reduce the number of local CA learning problems.  
\begin{figure}[t]
    \centering
    \includegraphics[width=\columnwidth]{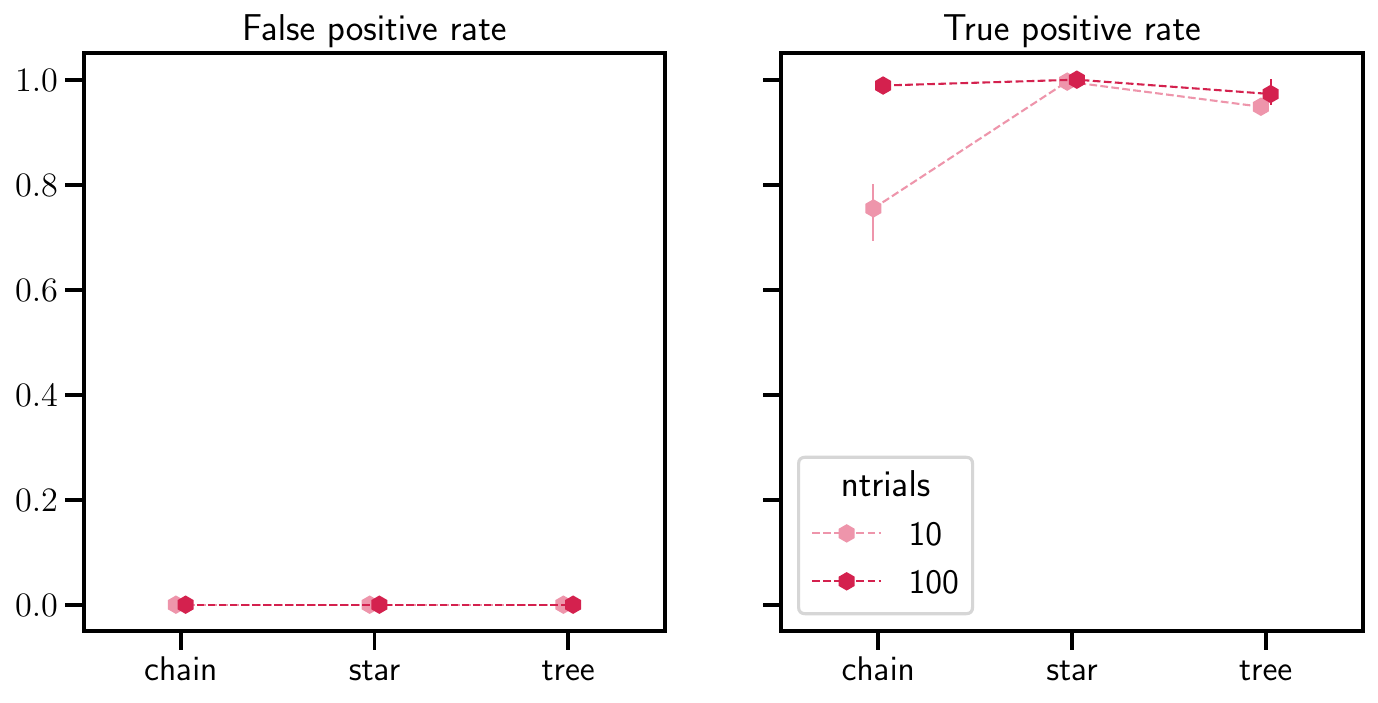}
    \caption{False positive (left) and true positive (right) rates for the proposed search procedure on the three CAN structures.}
\label{fig:CAN_learning}
\end{figure}
\Cref{fig:CAN_learning} shows the results.
Different values of \ntrials are given in hues of red, while vertical bars are interquartile ranges across the $S$ simulations.
Although \Cref{th:existenceCA} holds true, the FPR (left) confirms that \spectral never identifies a CLCA when none exists in the ground truth.
This holds across all settings and values of \ntrials.
Conversely, the TPR (right) depends on \ntrials.
As expected, due to nonconvexity, the initialization is important since residual convergence may not be reached within the maximum $K$ iterations.  
Overall, the results confirm that our search procedure equipped with \spectral achieves nearly optimal TPR across all the settings when $\ntrials=100$.   
\section{Conclusions and future works}\label{sec:conclusions}
This paper moves a step forward a principled foundation for collaborative causal AI systems. 
We tackled the learning of consistent CAN, a sheaf-theoretic framework where subjective Gaussian SCMs align and glue together via transposes of CLCAs.
Our problem formulation avoids the optimization of nonconvex objectives and its solution is an efficient search procedure using the local Riemannian method \spectral with closed-form updates.  
Unlike prior approaches, \spectral applies to both positive definite and semidefinite covariance matrices, enabling the recovery of CAN structures even when we observe global sections.  
Experiments on synthetic data confirmed the effectiveness of our approach, matching the performance of existing methods in CA learning and successfully retrieving CAN structures. 

Many theoretical results are omitted for lack of space, but they are available in the extended version of this paper \cite{d2025cantl}.
Notably, they highlight the impact of topology, which in some cases prevents the existence of global sections. 
Developing methodologies that, for a given topology, recover the sparsest structure inducing the same set of CAs--and minimally modify it to support global sections--is an important future step.
Another important aspect concerns the identifiability of the ground truth CLCA.
Working under the general setting of \cite{d2025causal}, we can only guarantee structural interventional consistency of the learned maps through structural priors.
Going forward by ensuring counterfactual consistency \cite{xia2024neural} is important, although it might require availability of interventional and counterfactual data.
Finally, moving beyond synthetic and hybrid settings is essential to fully evaluate CAN potential in real-world scenarios.
This is a crucial aspect for the sheaf-theoretic framework \cite{d2025relativity} in general, particularly because real-world applications are still largely lacking.
\clearpage
\spara{Acknowledgements.}
The work was supported by the SNS JU project 6G-GOALS \cite{strinati2024goal} under the EU’s Horizon program Grant Agreement No 101139232, by Huawei Technology France SASU under Grant N. Tg20250616041, and by the  European Union under the Italian National Recovery and Resilience Plan of NextGenerationEU, partnership on Telecommunications of the Future (PE00000001- program RESTART).

\balance
\bibliographystyle{IEEEbib}
\bibliography{bibliography}

\end{document}